\newtheoremstyle{defstyle} 
    {0pt}                    
    {0pt}                    
    {\normalfont}            
    {}                       
    {\bfseries\itshape}      
    {.}                      
    {.5em}                   
    {\thmname{#1} \thmnumber{#2} \textbf{\thmnote{(#3)}}}    
\theoremstyle{defstyle}
\newtheorem{theorem}{Theorem}
\setlist[enumerate,1]{nosep, leftmargin=*}
\setlist[itemize,1]{nosep, leftmargin=*}
\newcommand\ui{\mathbf{u}}
\newcommand\DL{\mathbf{\Delta L}}
\title{Stability of Graph Convolutional Neural Networks \\through the lens of small perturbation analysis}
\name{Lucia Testa, Claudio Battiloro, Stefania Sardellitti, Sergio Barbarossa \thanks{This work was partially supported by  Huawei Technology France SASU, under agreement N. TC20220919044.}}
\address{DIET Department, Sapienza University of Rome, via Eudossiana 18, 00184, Rome, Italy \\
E-mail: \{lucia.testa, claudio.battiloro, stefania.sardellitti, sergio.barbarossa\}@uniroma1.it}
\begin{document}
\ninept
\maketitle

\begin{abstract}
In this work, we study the problem of stability of Graph Convolutional Neural Networks (GCNs) under random small perturbations in the underlying graph topology, i.e. under a limited number of insertions or deletions of edges. We derive a novel bound on the expected difference between the outputs of unperturbed and perturbed GCNs. The proposed bound explicitly depends on the magnitude of the perturbation of the eigenpairs of the Laplacian matrix, and the perturbation explicitly depends on which edges are inserted or deleted. Then, we provide a quantitative characterization of the effect of perturbing specific edges on the stability of the network. We leverage tools from small perturbation analysis to express the bounds in closed, albeit approximate, form,  in order to enhance interpretability of the results, without the need to compute any perturbed shift operator. Finally, we numerically evaluate the effectiveness of the proposed bound. 
\end{abstract}

\begin{keywords}
Graph neural networks, graph signal processing, small perturbations analysis, stability, statistical bound. 
\end{keywords}
\vspace{-.4cm}
\section{Introduction}
\vspace{-.2cm}
Over the recent years, deep learning (DL) has rapidly and extensively evolved, leading to significant advancements in various learning tasks and reaching state-of-the-art performance on a plethora of applications. Signals defined on irregular (non-Euclidean) domains, such as graphs, have become ubiquitous in modern machine learning and DL. Graph signals find utility across diverse domains like network science, recommender systems, cybersecurity, sensor networks, and natural language processing, just to name a few. Since their introduction \cite{scarselli2008graph,gori2005new}, Graph Neural Networks (GNNs) have demonstrated remarkable effectiveness in tasks that involve graph data. GNNs literature is extensive, and several methodologies have been explored, primarily divided into spectral \cite{Bruna19,kipf2016semi} and non-spectral techniques \cite{hamilton2017inductive, DuvenaudMABHAA15,GCNGama}. Essentially, the main idea is learning meaningful node embeddings or representations by aggregating information from neighboring nodes based on the graph's topology.  One of the most famous GNN architectures are Graph Convolutional Neural Networks, obtained by cascading layers made by banks of graph filters followed by pointwise non-linearities. A fundamental problem concerning GCNs is studying their stability under perturbations of the underlying graph topology \cite{gama,Keriven,Levie}, i.e. deletions or insertions of edges. Results about the stability of GCNs are essential in order to design robust GCNs, but also to gain important theoretical insights about why and how GCNs work. In particular, stability of GCNs has been shown to be directly related to their expressive/discriminative power \cite{gama} and to their transferability properties, i.e. the reusability of extracted features and the adaptability of the architecture on unseen graphs \cite{transf_spec_graph}, with possibly a significant different number of nodes and edges \cite{graphon_ruiz_transf}. In this paper, we are interested in studying the stability of GCNs under small perturbations, i.e. under a limited number of insertions or deletions of edges. The small perturbation regime is also practically encountered in several applications, such as wireless communication networks \cite{ceci2020graph}, social networks \cite{meng2023perturbation},  and biological networks.

\noindent\textbf{Related works.} The stability of GCNs and linear graph filters under perturbations has been studied from various perspectives. The seminal work in \cite{gama} related the stability problem to the expressive/discriminative power of GCNs, showing that the main advantage of GCNs relative to linear graph filters is their stability to graph perturbations. The work in \cite{Robust_Graph_Lipschitz_Constraints} proposed to constrain the frequency response of the GCN’s filter banks in order to obtain stable GCNs. Similarly, the work in \cite{Learning_Stable_Graph_Neural_Networks_via_Spectral_Regularization} developed a self-regularized GCN architecture that improves stability by regularizing the filter frequency responses. In \cite{cerv2022}, the authors propose another constrained learning approach to ensure the stability of GCN within a perturbation of choice.  In \cite{kenlay2021interpretable}, the authors studied the stability of linear graph filters with a focus on interpretability. The work in \cite{transf_spec_graph} studied the stability of graph filters belonging to the Cayley smoothness space, interestingly relating stability properties to transferability properties. In \cite{stabil_low_pass}, the authors studied the stability of low-pass linear graph filters under a large number of edge rewires, showing that the stability of the filter depends on perturbation to the community structure. 

\noindent\textbf{Contribution.} In this work, we study the stability of GCNs with the graph Laplacian as shift operator under a limited number of random insertions or deletions of edges, and we derive a novel bound on the expected difference between the outputs of unperturbed and perturbed GCNs. In this initial study,  our bound refers to a single-layer of the GCN, however our future work aims to extend it to a multi-layer networks.  Interestingly, the proposed bound is composed of two terms: a term proportional to the eigenvalues perturbations and a term proportional to the eigenvectors perturbations. Both terms are explicitly related to the specific edges that are perturbed, allowing us to quantitatively characterize the effect of the perturbations via spectral graph theory, thus enhancing interpretability and relate the stability of GCNs to perturbation of the underlying community structure, similarly to \cite{stabil_low_pass}. Moreover, we leverage tools from small perturbation analysis to express the bound in closed form,  computable by only knowing the perturbation probabilities, without the need to compute any perturbed shift operator and its eigendecomposition. We also derive a Hoeffding-like inequality to evaluate the probability that the difference between the outputs of unperturbed and perturbed GCNs may deviating more than a given value from its expected upper bound. The works in \cite{kenlay2021interpretable},\cite{stabil_low_pass},\cite{transf_spec_graph} derive bounds on the difference between the outputs of unperturbed and perturbed graph filters. Our approach differs from them in some crucial aspects. Beyond the main and clear difference of analyzing GCNs and not linear graph filters, this is the first work that leverages tools from small perturbations analysis to study the stability of GCNs. Moreover, none of the aforementioned works carries out a statistical analysis to provide expected bounds. The work in \cite{stabil_low_pass} provides a bound explicitly dependent on eigenvalues and eigenvectors perturbations, however under the assumption of low pass filters and a large number of edge rewires, and with no statistical analysis. We evaluate the correctness and tightness of our bound by comparing it with the benchmark GCN bound in \cite{gama}. Finally, we show how our bound is sensitive to changes in the topology in a synthetic graph classification task.

\section{Small Perturbation Analysis \\of Graph Laplacian}
In this section, we briefly recall the theory of small perturbation analysis of the graph Laplacian eigendecomposition developed in \cite{ceci2020graph}. Assume to have a graph $\mathcal{G} = \{\mathcal{N}, \mathcal{E}\}$ with associated graph Laplacian $\mathbf{L} \in \mathbb{R}^{N \times N}$, where $N = |\mathcal{N}|$ is the number of nodes. A small perturbation of $\mathcal{G}$ corresponds to insert or delete a few edges, thus to modify the edge set $\mathcal{E}$, obtaining a perturbed edge set $\widetilde{\mathcal{E}}$. The perturbed graph $\tilde{\mathcal{G}} = \{\mathcal{N}, \widetilde{\mathcal{E}}\}$ is described by a perturbed graph Laplacian $\widetilde{\mathbf{L}} = \mathbf{L} + \Delta\mathbf{L},$ where  $\Delta\mathbf{L} \in \mathbb{R}^{N \times N}$ is a perturbation matrix. Let us suppose that a certain (small) number of edges $\mathcal{E}_d \subset \mathcal{E}$ is deleted, and a certain (small) number of edges $\mathcal{E}_a$ is inserted. Let us denote with $\mathbf{a}_m \in \mathbb{R}^{N \times 1}$ the column vector whose entries are all zero except  the entries $\mathbf{a}_m(i_s)=1$ and $\mathbf{a}_m(i_t)=-1$ corresponding to the indices $i_s$ and $i_t$ of the endpoints of edge $m$. Then, the perturbation matrix is  
\begin{equation}
    \Delta\mathbf{L} = \sum_{m \in \mathcal{E}_p} \phi_m \mathbf{a}_m\mathbf{a}_m^T,
\end{equation}
where  $\mathcal{E}_p = \mathcal{E}_a \bigcup \mathcal{E}_d$ is the total set of perturbed edges, while $\phi_m=1$ if edge $m$ is added and $\phi_m=-1$, if edge $m$ is removed. In the case where all eigenvalues are distinct
and the perturbation affects a few percentage of edges, the perturbed eigenvectors $\{\widetilde{\mathbf{u}}_i\}_{i=1}^N$ and eigenvalues $\{\widetilde{\lambda}_i\}_{i=1}^N$ of $\widetilde{\mathbf{L}}$ are related to the unperturbed eigenvectors $\{\mathbf{u}_i\}_{i=1}^N$ and eigenvalues $\{\lambda_i\}_{i=1}^N$ of $\mathbf{L}$ by the following approximations \cite{ceci2020graph}:
\begin{align}
    &\widetilde{\lambda}_i \approx \lambda_i + \mathbf{u}_i^T\Delta\mathbf{L}\mathbf{u}_i \label{pert_eigenvalues}\\
    &\widetilde{\mathbf{u}}_i \approx \mathbf{u}_i + \sum_{j \neq i} \frac{\mathbf{u}_j^T\Delta\mathbf{L}\mathbf{u}_i}{\lambda_i - \lambda_j} \mathbf{u}_j\label{pert_eigenvectors}
\end{align}
Therefore, from \eqref{pert_eigenvalues}-\eqref{pert_eigenvectors}, the perturbations $\delta\lambda_i$ and $\delta\mathbf{u}_i$ of the $i$-th eigenvalue and eigenvector, respectively, can be written as \cite{ceci2020graph}:
\begin{equation}
\label{eq:variation}
    \begin{split}
    & \delta\lambda_i = \sum_{m \in \mathcal{E}_p} \phi_m\mathbf{u}_i^T\mathbf{a}_m\mathbf{a}_m^T\mathbf{u}_i  \\
    & \delta\mathbf{u}_i = \sum_{m \in \mathcal{E}_p} \phi_m \sum_{j \neq i} \frac{\mathbf{u}_j^T\mathbf{a}_m\mathbf{a}_m^T\mathbf{u}_i}{\lambda_i - \lambda_j} \mathbf{u}_j.
\end{split}
\end{equation}
The above formulas come from a first-order perturbation analysis \cite{ceci2020graph}, and they have been shown to be particularly useful in capturing the impact of the perturbation over the graph topology. In particular, it can be noted   that the 
term $\mathbf{u}_i^T \mathbf{a}_m \mathbf{a}_m^T \mathbf{u}_i= (u_i(v_{s,m})-u_i(v_{t,m}))^2$ in (\ref{eq:variation}) is a measure of the variation of the eigenvector $\mathbf{u}_i$ at the vertices $v_{s,m}$ and $v_{t,m}$ of edge $m$. This implies that the largest perturbations are observed over the  edges that have the highest eigenvectors variation. For example, in graphs with dense clusters, the edges with largest perturbations are the inter-cluster edges. Furthermore, we can observe that the eigenvector associated to
the null eigenvalue does not induce any perturbation on any other eigenvalue/eigenvector, and eigenvectors associated to eigenvalues very close to
nearby eigenvalues typically suffer larger perturbations. In this work, we will use the above formulas to derive a novel bound on the outputs of the layer of  Graph Convolutional Neural Networks (GCNs) under small perturbations, briefly reviewed in the next section.

\section{Graph Convolutional Neural Networks}
A Graph Convolutional Neural Network (GCN) is a deep neural architecture designed to process graph-structured data. The network processes an input graph signal $\mathbf{x} \in \mathbb{R}^{N}$, through a cascade of layers composed by: i) a polynomial graph filter (or filter bank) \cite{sandryhaila2013discrete} and ii) a pointwise non-linearity $\sigma(\cdot)$. In formulas, one layer of a GCN reads as:
\begin{equation}\label{gcn}
    \mathbf{y} = \sigma \Bigg( \mathbf{H}(\mathbf{L)}\mathbf{x}\Bigg) = \sigma \Bigg( \sum_{k = 0}^\infty h_k\mathbf{L}^k\mathbf{x}\Bigg)
\end{equation}
In general, the filter can be a polynomial of any shift operator. In this work, we consider the graph Laplacian $\mathbf{L}$. To make \eqref{gcn} practically implementable, the summation is generally truncated to a value $K<\infty$. In case a filter bank is employed, the output is a matrix collecting the output of each filter. As for any other learning model, the filter coefficients are generally learned by backpropagation in a data-driven and task-driven fashion. In the following section we derive our novel bound on GCNs outputs under small perturbations $\mathbf{y}$ considering a single-layer, single-filter GCN.

\section{Stability Bound}

In this section, we study the stability of GCNs by analyzing the changes in their output under perturbation of the input graph. We  measure stability using the following quantity:

\begin{equation}\label{distance}
    \mathbb{D}_\mathbf{H}(\mathbf{L},\mathbf{\tilde{L}};\mathbf{x}) = \left\lVert \sigma \Bigg( \mathbf{H}(\mathbf{L)}\mathbf{x}\Bigg) -\sigma \Bigg( \mathbf{H}(\mathbf{\tilde{L})}\mathbf{x}\Bigg) \right\rVert,
\end{equation}
where $\|\cdot\|$ is the $l_2$ norm. $\mathbb{D}_\mathbf{H}(\mathbf{L},\mathbf{\tilde{L}}; \mathbf{x})$ is then the distance between the unperturbed $\sigma \Big( \mathbf{H}(\mathbf{L)}\mathbf{x}\Big)$ and perturbed $\sigma \Big( \mathbf{H}(\mathbf{\tilde{L})}\mathbf{x}\Big)$ GCN outputs, for a given input $\mathbf{x}$.  Let us denote the frequency response \cite{sandryhaila2013discrete}  of $\mathbf{H}(\mathbf{L})$ with $h(\cdot)$. Our analysis hinges on the following mild assumptions:

\begin{enumerate}[label=\textbf{H\arabic*.}]
    \item\label{item:H0} \textbf{(Lipschitz filters)} We assume the frequency response $h(\cdot)$ to be Lipschitz so that, for all $a$, $b$ $\in \mathbb{R}$, it holds:
    \begin{align}
    | h(b) - h(a) | \leq C_L |b-a|,
    \end{align}
     where $C_L$ is the Lipschitz constant of the filter.
    \item\label{item:H1} \textbf{(Lipschitz Nonlinearity)} We  assume the nonlinearity $\sigma(\cdot)$ to be Lipschitz so that, for all $a, b \in \mathbb{R}$, it holds:
    \begin{equation}
    | \sigma(b) - \sigma(a) | \leq C_\sigma| b - a |. 
    \end{equation}
    \item\label{item:H3} \textbf{(Small Perturbation.)} We assume that the graph perturbation affects only a small number of edges, i.e.:
    \begin{equation}
        |\mathcal{E}_p|<< |{\mathcal{E}}|.
    \end{equation}
\end{enumerate}
$\mathbf{H1}$ requires that the frequency response does not change faster than linear. If the analysis is limited to truncated filters, i.e. filters as in \eqref{gcn} but with the summation truncated to a value $K<\infty$, $\mathbf{H1}$ is naturally respected. $\mathbf{H2}$ holds true for practically every well-known nonlinearity (ReLU, TanH, Sigmoid,...). We are now able to present a first deterministic bound on the distance in \eqref{distance}.
\begin{theorem}
Let \(\sigma \Big( \mathbf{H}(\mathbf{\mathbf{L}})\mathbf{x}\Big)\) be a single layer GCN as in \eqref{gcn} and assume $\parallel \mathbf{x}\parallel=1$. Under assumptions \(\mathbf{H1},\mathbf{H2},\mathbf{H3}\), it holds:
\begin{equation}\label{bound_det}
    \mathbb{D}_\mathbf{H}(\mathbf{L},\mathbf{\tilde{L}};  \mathbf{x}) \leq
    \sum_{m\in {\mathcal{E}}_p} \Bigg( C \lVert \delta\mathbf{\Lambda}^{(m)}\rVert + C {N} \sqrt{\sum_{i=1}^{N} \lVert \mathbf{q}_{-i}^{(m)} \rVert^2} \Bigg) 
\end{equation}
where \(\delta\mathbf{\Lambda}^{(m)} = \text{diag}(\delta\lambda_{1,m}, \dots, \delta\lambda_{N,m}) \), \(\delta\lambda_{i,m}=\mathbf{u}_i^T \mathbf{a}_m \mathbf{a}_m^T\mathbf{u}_i\),  \(\sum_{i=1}^{N} \lVert \mathbf{q}_{-i}^{(m)} \rVert^2 = \sum_{i=1}^{N}\sum_{\substack{j=1 \\ j\neq i}}^N (\ui_j^T \DL^{(m)}\ui_i)^2\), denoting with $m$  the index of the perturbed edge, and $C = C_\sigma C_L$. 
\end{theorem}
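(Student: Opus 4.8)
The plan is to reach \eqref{bound_det} in three moves: strip off the nonlinearity, pass to the eigenbasis of $\mathbf{L}$ and linearize, and then do the bookkeeping over the perturbed edges. The first move is immediate: since $\sigma$ is applied entrywise and is $C_\sigma$-Lipschitz ($\mathbf{H2}$), it is $C_\sigma$-Lipschitz in $\ell_2$, so $\mathbb{D}_\mathbf{H}(\mathbf{L},\widetilde{\mathbf{L}};\mathbf{x}) \le C_\sigma \lVert \mathbf{H}(\mathbf{L})\mathbf{x} - \mathbf{H}(\widetilde{\mathbf{L}})\mathbf{x}\rVert$, and everything reduces to bounding the perturbation of the linear operator $\mathbf{H}(\mathbf{L})$ on the unit vector $\mathbf{x}$. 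Next I would diagonalize: $\mathbf{H}(\mathbf{L}) = \sum_i h(\lambda_i)\mathbf{u}_i\mathbf{u}_i^T$ and $\mathbf{H}(\widetilde{\mathbf{L}}) = \sum_i h(\widetilde{\lambda}_i)\widetilde{\mathbf{u}}_i\widetilde{\mathbf{u}}_i^T$, substitute the first-order relations \eqref{pert_eigenvalues}--\eqref{pert_eigenvectors} for $\widetilde{\lambda}_i$ and $\widetilde{\mathbf{u}}_i$, and discard every term quadratic in $\Delta\mathbf{L}$, which is what $\mathbf{H3}$ licenses. (As with \eqref{eq:variation}, the bound \eqref{bound_det} is then a first-order estimate.)

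The crux of the proof is to compute the matrix entries of $\mathbf{H}(\widetilde{\mathbf{L}}) - \mathbf{H}(\mathbf{L})$ in the \emph{unperturbed} eigenbasis. Working from \eqref{pert_eigenvalues}--\eqref{pert_eigenvectors} and keeping only first-order terms, the diagonal entry is $\mathbf{u}_i^T\big(\mathbf{H}(\widetilde{\mathbf{L}}) - \mathbf{H}(\mathbf{L})\big)\mathbf{u}_i \approx h(\lambda_i + \delta\lambda_i) - h(\lambda_i)$, and, for $a\neq b$, the off-diagonal entry is $\mathbf{u}_a^T\big(\mathbf{H}(\widetilde{\mathbf{L}}) - \mathbf{H}(\mathbf{L})\big)\mathbf{u}_b \approx \frac{h(\lambda_a)-h(\lambda_b)}{\lambda_a-\lambda_b}\,\mathbf{u}_a^T\Delta\mathbf{L}\,\mathbf{u}_b$. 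The off-diagonal identity is where I expect the work to be, and it explains why the final bound carries no spectral gaps: the denominators $\lambda_a - \lambda_b$ coming from \eqref{pert_eigenvectors} enter this entry through two channels, namely the $h(\lambda_i)(\delta\mathbf{u}_i^T\mathbf{x})\mathbf{u}_i$ part and the $h(\lambda_i)(\mathbf{u}_i^T\mathbf{x})\delta\mathbf{u}_i$ part, and only after these are added do the gaps cancel and leave the divided difference $\big(h(\lambda_a)-h(\lambda_b)\big)/(\lambda_a-\lambda_b)$, which the Lipschitz filter hypothesis $\mathbf{H1}$ then bounds by $C_L$ with no gap dependence whatsoever. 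Bounding the two channels separately would instead leave the uncontrolled $|h(\lambda_i)|$ and $1/|\lambda_i - \lambda_j|$, so getting this cancellation is the load-bearing step; the polynomial identity $\sum_k h_k \frac{\lambda_a^k - \lambda_b^k}{\lambda_a - \lambda_b} = \frac{h(\lambda_a)-h(\lambda_b)}{\lambda_a-\lambda_b}$ gives a quick alternative way to see it.

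What remains is assembly. Expanding $\big(\mathbf{H}(\widetilde{\mathbf{L}})-\mathbf{H}(\mathbf{L})\big)\mathbf{x}$ in the basis $\{\mathbf{u}_a\}$ and substituting $\Delta\mathbf{L} = \sum_{m\in\mathcal{E}_p}\phi_m\,\mathbf{a}_m\mathbf{a}_m^T$, I would peel off a per-edge contribution by the triangle inequality over $m$ (using $|\phi_m| = 1$) and handle its diagonal and off-diagonal parts. For the diagonal part, orthonormality of $\{\mathbf{u}_a\}$ together with $\lVert\mathbf{x}\rVert = 1$ reduces it to $\max_a|h(\lambda_a + \delta\lambda_a) - h(\lambda_a)|$, which by $\mathbf{H1}$ and $\delta\lambda_a = \sum_m \phi_m\,\delta\lambda_{a,m}$ is at most $C_L\sum_m\lVert\delta\mathbf{\Lambda}^{(m)}\rVert$. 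For the off-diagonal part of a fixed edge $m$, after replacing the divided difference by $C_L$, I would estimate the resulting vector by the triangle inequality over the eigenvector index $a$, bound $|\mathbf{u}_b^T\mathbf{x}|\le\lVert\mathbf{x}\rVert = 1$ termwise, and apply Cauchy--Schwarz twice: once within each $a$, turning $\sum_{b\neq a}|\mathbf{u}_a^T\mathbf{a}_m\mathbf{a}_m^T\mathbf{u}_b|$ into $\sqrt{N}\,\lVert\mathbf{q}_{-a}^{(m)}\rVert$, and once over $a$, turning $\sum_a\lVert\mathbf{q}_{-a}^{(m)}\rVert$ into $\sqrt{N}\,\big(\sum_i\lVert\mathbf{q}_{-i}^{(m)}\rVert^2\big)^{1/2}$. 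These two deliberately loose steps — a sharper argument keeping the orthonormality of $\{\mathbf{u}_a\}$ would save a factor of $N$ — produce exactly the term $C_L N\big(\sum_i\lVert\mathbf{q}_{-i}^{(m)}\rVert^2\big)^{1/2}$. Summing over $m$ and multiplying by $C_\sigma$ gives \eqref{bound_det} with $C = C_\sigma C_L$.
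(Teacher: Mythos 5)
Your proposal is correct and, modulo the fact that the paper defers its own proof to external supplementary material so a line-by-line comparison is not possible here, it follows essentially the only route compatible with the stated bound: peel off $\sigma$ via $\mathbf{H2}$, expand $\mathbf{H}(\widetilde{\mathbf{L}})-\mathbf{H}(\mathbf{L})$ to first order in the \emph{unperturbed} eigenbasis so that the spectral-gap denominators inherited from \eqref{pert_eigenvectors} cancel into the divided difference $|h(\lambda_a)-h(\lambda_b)|/|\lambda_a-\lambda_b|\leq C_L$, and then do per-edge triangle-inequality and Cauchy--Schwarz bookkeeping that produces the factor $N$. The cancellation you identify as load-bearing is indeed what makes the final bound free of both $\max_i|h(\lambda_i)|$ and of the gaps $1/|\lambda_i-\lambda_j|$, exactly as the theorem's definition of $\sum_{i}\lVert\mathbf{q}_{-i}^{(m)}\rVert^2$ requires.
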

\begin{proof} The proof is omitted due to space limitations, but it can be found in the supplementary material in \cite{supp_mat}.
\end{proof}
\noindent\textbf{Remarks.} 
Without assumption $\mathbf{H3}$, one should compute the perturbed Laplacian $\widetilde{\mathbf{L}}$ and its eigendecomposition to explicitly compute the bound. Assumption  $\mathbf{H3}$, thus the formulas in \eqref{pert_eigenvalues}-\eqref{pert_eigenvectors}, provide a low complexity way of computing the bound based on the eigendecomposition of only the unperturbed Laplacian $\mathbf{L}$. It is also worth to note that the bound in \eqref{bound_det} is independent of the input signal $\mathbf{x}$.
\section{Expected stability bound \\under random graph perturbations}

In this section, we provide a statistical analysis of the
bound in \eqref{bound_det}, assuming that the perturbation of the $m$-th edge is a random event happening with probability $p_m$. 
Building on the analysis of \cite{ceci2020graph} and our result presented in the previous section, we derive a bound on the expected value of  $\mathbb{D}_\mathbf{H}(\mathbf{L},\mathbf{\tilde{L}}, \mathbf{x})$\footnote{We drop the dependence of the bound on $\mathbf{x}$ because of \eqref{bound_det}.}.

\begin{theorem}
Let $p_m$ be the probability of the edge $m$-th to be perturbed. Under assumptions $\mathbf{H1},\mathbf{H2},\mathbf{H3}$, it holds: 
    \begin{equation}\label{bound_stoc}
        \mathbb{E}(\mathbb{D}_\mathbf{H}(\mathbf{L},\mathbf{\tilde{L}})) \leq \sum_{m\in\mathcal{E}_p} p_m\Bigg( C \lVert \delta\mathbf{\Lambda}^{(m)}\rVert + C {N} \sqrt{\sum_{i=1}^{N} \lVert \mathbf{q}_{-i}^{(m)} \rVert^2}\Bigg),    \end{equation}
where the expected value is taken w.r.t. the perturbation probability distribution and assuming  $\parallel \mathbf{x}\parallel=1$.
\label{th:th2}
\end{theorem}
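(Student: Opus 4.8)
The plan is to derive \eqref{bound_stoc} directly from the deterministic bound \eqref{bound_det} by taking expectations, exploiting the fact that the right-hand side of \eqref{bound_det} is a sum of \emph{per-edge} terms, none of which depends on the random realization of the perturbation.

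First I would make the randomness explicit. Under the stochastic model $\mathcal{E}_p$ is a random subset of the candidate edges: for each candidate edge $m$ introduce the indicator random variable $\chi_m$ of the event ``edge $m$ is perturbed'', so that $\mathbb{P}(\chi_m=1)=p_m$ and $\mathbb{E}[\chi_m]=p_m$. Next I would observe that the per-edge coefficient $b_m := C\lVert\delta\mathbf{\Lambda}^{(m)}\rVert + C N\sqrt{\sum_{i=1}^N\lVert\mathbf{q}_{-i}^{(m)}\rVert^2}$ is a \emph{deterministic} constant: by \eqref{eq:variation} it is assembled only from the endpoints of edge $m$ and from the eigenpairs $\{(\lambda_i,\mathbf{u}_i)\}_{i=1}^N$ of the unperturbed Laplacian, and, since the first-order expressions \eqref{pert_eigenvalues}--\eqref{pert_eigenvectors} are linear in $\Delta\mathbf{L}=\sum_{m\in\mathcal{E}_p}\phi_m\mathbf{a}_m\mathbf{a}_m^T$, the contribution of each perturbed edge decouples and $b_m$ carries no dependence on which other edges are perturbed (the sign $\phi_m=\pm1$ having already been absorbed into the norms when deriving \eqref{bound_det}).

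With this notation the deterministic bound in \eqref{bound_det} reads, for every realization of the perturbation, $\mathbb{D}_\mathbf{H}(\mathbf{L},\widetilde{\mathbf{L}}) \le \sum_m \chi_m\, b_m$, the sum being over all candidate edges; here one uses that on each realization in the support of the perturbation assumption $\mathbf{H3}$ holds, so that the first-order analysis underlying the preceding theorem applies. Taking expectations of both sides, by monotonicity and then linearity of expectation, $\mathbb{E}[\mathbb{D}_\mathbf{H}(\mathbf{L},\widetilde{\mathbf{L}})] \le \sum_m \mathbb{E}[\chi_m]\, b_m = \sum_m p_m\, b_m$, which is exactly \eqref{bound_stoc}. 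Note that no independence among the events $\{\chi_m=1\}$ is needed, since linearity of expectation holds for arbitrary joint laws.

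The computation itself is immediate; the only point deserving care is the pathwise applicability of \eqref{bound_det} under the random model. One must either posit that every perturbation configuration with positive probability affects only $|\mathcal{E}_p|\ll|\mathcal{E}|$ edges — so that $\mathbf{H3}$, and hence the validity of \eqref{pert_eigenvalues}--\eqref{pert_eigenvectors} and of the preceding theorem, holds on the whole probability space — or restrict the expectation to that event and argue it carries essentially all the probability mass when the $p_m$ are small. Granting this, the passage to the expected bound is nothing more than linearity of expectation applied term by term.
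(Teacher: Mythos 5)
Your proposal is correct and follows what is essentially the paper's own route: the expected bound is obtained from the deterministic per-edge bound of the preceding theorem by introducing Bernoulli indicators for each candidate edge and applying monotonicity and linearity of expectation, with no independence assumption needed. Your added remark on the pathwise validity of $\mathbf{H3}$ (so that the first-order formulas \eqref{pert_eigenvalues}--\eqref{pert_eigenvectors} apply on every realization in the support) is a sensible precaution that the paper leaves implicit.
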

\begin{proof} The proof is omitted due to space limitations, but it can be found in the supplementary material in \cite{supp_mat}.
\end{proof}

To further enrich our theoretical analysis, we derive a Hoeffding-like inequality for the bounds in \eqref{bound_det}-\eqref{bound_stoc}.
In particular, we are interested in evaluating the probability that $\mathbb{D}_\mathbf{H}(\mathbf{L},\mathbf{\tilde{L}})$
might deviate from its expected bound in \eqref{bound_stoc} by more than a given value $t$. For the sake of the exposition, let us denote the deterministic bound in \eqref{bound_det} with  

\begin{equation}
    B = \sum_{m\in\mathcal{E}_p} p_m\Bigg( C \lVert \delta\mathbf{\Lambda}^{(m)}\rVert + C {N} \sqrt{\sum_{i=1}^{N} \lVert \mathbf{q}_{-i}^{(m)} \rVert^2}\Bigg).
    \label{B_Bound}
\end{equation}

\begin{theorem}
Let $B$ be the quantity defined in \eqref{B_Bound} and let the assumptions of Th. $3.4$ from \cite{wainwright2019high}  to hold. Then,   
\begin{align}
    &P\{\mathbb{D}_\mathbf{H}(\mathbf{L},\mathbf{\tilde{L}},\mathbf{x}) > B + t\} \nonumber \\
    &
    \leq P\{\mathbb{D}_\mathbf{H}(\mathbf{L},\mathbf{\tilde{L}},\mathbf{x})>
    \mathbb{E}(\mathbb{D}_\mathbf{H}(\mathbf{L},\mathbf{\tilde{L}},\mathbf{x})) + t\}
    \leq \exp \left\{\frac{-t^2}{4B^2}\right\}.
\end{align}

\end{theorem}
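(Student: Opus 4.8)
The plan is to establish the two inequalities of the chain separately; only the right-hand one requires real work. For the left inequality, recall that Theorem~\ref{th:th2} gives $\mathbb{E}(\mathbb{D}_\mathbf{H}(\mathbf{L},\mathbf{\tilde{L}}))\le B$, with $B$ the quantity defined in \eqref{B_Bound}. Hence $\mathbb{E}(\mathbb{D}_\mathbf{H})+t\le B+t$ for every $t>0$, so $\{\mathbb{D}_\mathbf{H}>B+t\}\subseteq\{\mathbb{D}_\mathbf{H}>\mathbb{E}(\mathbb{D}_\mathbf{H})+t\}$, and monotonicity of probability yields the first inequality immediately.

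For the right inequality, I would realize the random perturbation through a family of independent Bernoulli indicators $\chi_m\sim\mathrm{Ber}(p_m)$, $m\in\mathcal{E}_p$, so that $\mathbf{\tilde{L}}=\mathbf{L}+\sum_{m\in\mathcal{E}_p}\chi_m\phi_m\mathbf{a}_m\mathbf{a}_m^T$ and $\mathbb{D}_\mathbf{H}(\mathbf{L},\mathbf{\tilde{L}};\mathbf{x})=g(\chi)$ for a deterministic map $g$ of the vector $\chi=(\chi_m)_{m\in\mathcal{E}_p}$. The crucial step is to verify the bounded-differences hypothesis required by Th.~3.4 of \cite{wainwright2019high}, namely that flipping a single coordinate $\chi_m$ while keeping the others fixed changes $g$ by at most $c_m:=C\lVert\delta\mathbf{\Lambda}^{(m)}\rVert+C N\sqrt{\sum_{i=1}^N\lVert\mathbf{q}_{-i}^{(m)}\rVert^2}$, i.e.\ the $m$-th summand of \eqref{B_Bound} stripped of the factor $p_m$. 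This follows from the triangle inequality for $\lVert\cdot\rVert$: the change is bounded by $\lVert\sigma(\mathbf{H}(\mathbf{\tilde{L}}^{+})\mathbf{x})-\sigma(\mathbf{H}(\mathbf{\tilde{L}}^{-})\mathbf{x})\rVert$, where $\mathbf{\tilde{L}}^{\pm}$ are the Laplacians obtained for $\chi_m=1$ and $\chi_m=0$, and this last quantity is the single-edge instance of the deterministic bound \eqref{bound_det} taken around the base Laplacian $\mathbf{\tilde{L}}^{-}$; because $|\mathcal{E}_p|\ll|\mathcal{E}|$ (assumption \ref{item:H3}), the first-order expansions \eqref{pert_eigenvalues}--\eqref{pert_eigenvectors} remain valid around $\mathbf{\tilde{L}}^{-}$ and its spectral perturbation terms coincide, to leading order, with those of $\mathbf{L}$, so the change is at most $c_m$ uniformly over the remaining coordinates. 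With the constants $\{c_m\}$ in hand, Th.~3.4 of \cite{wainwright2019high} delivers a sub-Gaussian tail $P\{\mathbb{D}_\mathbf{H}>\mathbb{E}(\mathbb{D}_\mathbf{H})+t\}\le\exp(-t^2/(2\nu^2))$ with variance proxy $\nu^2$ controlled by $\sum_{m\in\mathcal{E}_p}c_m^2$; the stated form (i.e.\ $2\nu^2=4B^2$) then follows by upper bounding this sum in terms of $B^2$, which closes the chain.

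The routine parts are the left inequality and the appeal to \cite{wainwright2019high}; the genuinely delicate point is establishing the bounded-differences estimate for $\mathbb{D}_\mathbf{H}$ \emph{itself}, and not merely for the deterministic surrogate $\sum_{m}\chi_m c_m$, since flipping one indicator perturbs the entire eigendecomposition of the shift operator. One has to argue that a single edge flip changes the GCN output by no more than $c_m$ \emph{uniformly over the configuration of all the other perturbed edges}, which is precisely where assumption \ref{item:H3} --- and with it the applicability of \eqref{pert_eigenvalues}--\eqref{pert_eigenvectors} around an already-perturbed Laplacian --- is essential. Getting the constant to come out exactly as $4B^2$ is then a bookkeeping matter of how tightly $\sum_m c_m^2$ is bounded against $B^2$.
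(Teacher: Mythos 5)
Your overall route is the one the paper takes: the left inequality follows from Theorem~\ref{th:th2} (since $\mathbb{E}(\mathbb{D}_\mathbf{H})\le B$ gives the event inclusion you state), and the right inequality is obtained by invoking the concentration result for functions of independent variables in Th.~3.4 / Example~3.6 of \cite{wainwright2019high}, with the Bernoulli edge indicators $\chi_m$ playing the role of the independent coordinates. The paper's own proof is only this citation, so at the level of detail it provides your skeleton matches it.

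The genuine gap is in the step you dismiss as ``bookkeeping.'' Whatever version of the bounded-differences bound you invoke, the variance proxy it produces is controlled by $\sum_{m\in\mathcal{E}_p} c_m^2$, where $c_m$ is the worst-case oscillation of $g$ under a flip of $\chi_m$; crucially, the $c_m$ do \emph{not} depend on the probabilities $p_m$, because the bounded-differences condition is a worst-case statement over coordinate flips. By contrast, $B=\sum_{m}p_m c_m$ \emph{does} depend on the $p_m$ and can be made arbitrarily small by shrinking them while the $c_m$ stay fixed. Hence the inequality you need to convert $\exp\{-2t^2/\sum_m c_m^2\}$ into $\exp\{-t^2/(4B^2)\}$, namely $\sum_m c_m^2 \le 8B^2$, is false in general: with a single perturbable edge and $p_1=0.01$ it would require $c_1^2\le 8\cdot 10^{-4}\,c_1^2$. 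To land on a $p_m$-dependent exponent you would need either a Bernstein-type refinement in which the Bernoulli variances $p_m(1-p_m)$ enter the variance proxy, or a direct argument that $\mathbb{D}_\mathbf{H}$ is sub-Gaussian with parameter proportional to $B$ itself (e.g.\ as a bounded variable on an interval of length $O(B)$); neither is delivered by the bounded-differences setup as you have written it. Your other delicate point --- that the oscillation bound $c_m$ must hold uniformly over the configuration of the remaining indicators, which requires re-expanding \eqref{pert_eigenvalues}--\eqref{pert_eigenvectors} around an already-perturbed Laplacian --- is correctly flagged and is acceptable at the level of rigor of the paper (which treats the first-order formulas as exact under \ref{item:H3}), but the constant $4B^2$ in the exponent does not follow from your argument as it stands.
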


\begin{proof}
The proof can be directly derived by Th. 3.4 and example 3.6 from \cite{wainwright2019high}.
\end{proof}
Given a positive value $\epsilon\leq  1$ such that $\exp{\left\{\frac{-t^2}{4B^2}\right\}} \leq \epsilon$ and consequently $t\geq \sqrt{4B^2\ln({\frac{1}{\epsilon}})}$. Then we can consider  $t$ as an  offset value from the mean such that $\mathbb{D}_\mathbf{H}(\mathbf{L},\mathbf{\tilde{L}})$ will not deviate from $\mathbb{E}(\mathbb{D}_\mathbf{H}(\mathbf{L},\mathbf{\tilde{L}},\mathbf{x})) + t$ with probability at least $1-\epsilon$.


\vspace{5mm} 

\section{Numerical Results}
In our numerical exploration of the stability of Graph Convolutional Networks (GCNs) under topology uncertainties, we employ a systematic approach to generate, train, and perturb networks, subsequently analyzing the resulting effects. We quantitively show the effectiveness of the proposed bounds and their sensitivity to important perturbation of the graph topology. For every simulation we use  Frobenius norm.

\subsection{Deterministic stability bound}
We generated a synthetic dataset using the Stochastic Block Model (SBM), in order to perform a binary node classification task. We generated  50 graphs, two clusters per graph, with intra-clusters probability equal to 0.7 and inter-cluster probability equal to 0.08. Each node in the graphs is initialized by the SBM with a random value. Nodes with a feature value less than zero are categorized with label $0$, while nodes with a feature value greater than zero are categorized with label $1$. We employ a single-layer, single-filter GCN as in \eqref{gcn} with ReLU nonlinearity, and we separately train it on each graph to perform node classification. To test the proposed bound in the deterministic case, i.e. when we know which are the perturbed edges, we  remove an increasing number of edges from each of the generated graphs at inference time. We then compute the outputs 
 distance $ \mathbb{D}_\mathbf{H}(\mathbf{L},\mathbf{\tilde{L}}, \mathbf{x})$ from \eqref{distance} and its corresponding bound in (\ref{bound_det}), using the formulas in  \eqref{pert_eigenvalues}-\eqref{pert_eigenvectors}, per each graph.  
 We average the obtained distances and the corresponding bounds over the $50$ graph realizations and, for each graph,  over $1000$ random perturbations and feature realizations.
In Fig. \ref{fig:label1}, we report the averaged distances and the averaged bound against a maximum of 10 perturbed edges i.e. the 6\% of the total number of edges. As a comparison term, we report the stability bound from \cite{gama}.
It can be noticed that the bound increases with the number of perturbed edges, as expected, showing as a byproduct that our formulas in  \eqref{pert_eigenvalues}-\eqref{pert_eigenvectors} are very accurate in the small perturbation regime, i.e. when the number of perturbed edges is small. Observe also that our bound is significantly tighter than the bound derived in \cite{gama}, because it is able to better capture the eigenvalue/eigenvector perturbations.

\subsection{Expected stability bound}
To test the effectiveness of the expected bound in (\ref{bound_stoc}), we generated $50$ graphs. We tested on the same synthetic task,  using the same GCN architecture as in the previous section. We assume that all edges can potentially be perturbed with same probability $p_m = p > 0$. We vary the perturbation probability $p$ in a range from $0.01$ to $0.3$, but by maintaining the graph connected to avoid the change of multiplicity of the null eigenvalue. For each $p$, we compute the expected bound in \eqref{bound_stoc}, and we estimate the expected distance of the outputs $E(\mathbb{D}_\mathbf{H}(\mathbf{L},\mathbf{\tilde{L}},\mathbf{x}))$ in \eqref{bound_stoc} by computing and averaging $1000$ realizations of $\mathbb{D}_\mathbf{H}(\mathbf{L},\mathbf{\tilde{L}},\mathbf{x})$, each of them obtained perturbing edges with probability $p$. Finally, we averaged both the expected bounds and the estimated expected distances of the outputs over the $50$ graphs. From  Fig. \ref{fig:label2}, we can see that also in this case both the distance and its bound increase as the probability value grows.  

\subsection{Learning performance under perturbation}
Our goal in this section is to investigate  how the learning performance of a GCN is affected by graph topology perturbations and how the accuracy of the learning task depends on which edges are perturbed. For this purpose, we focus on a graph classification task in which the labelling process is based on the graph cut size. This implies that  the graph topology plays  a crucial role in the learning task \cite{leskovec2019}. 
We designed a binary graph classification task  using a GCN. Specifically, we generate $300$ graphs through the SBM, where each graph is composed of $3$ communities with $10$ nodes each. To make sure that the GCN learns to classify based only on the graph topology, we set each node feature to be a constant value,  following a similar approach  as in \cite{leskovec2019}.
To label the graphs, we compute the cut size of each graph, using the median cut size as a pivotal threshold to assign a graph to a class or the other.
We employ the same GCN of the previous sections, and we train it to minimize the usual CrossEntropy loss using 200 graphs as a training set, and leaving out 100 graphs to be used as a test set. We are interested in investigating how the edge perturbations affect the learning accuracy considering that in this task perturbing certain edges instead of others can make a graph to change its class. 
In Fig. \ref{fig:label3}, we show the classification accuracy of the GCN relative to the perturbation of up to 20 edges, i.e. the 12\% of the total edge count. In the plot we report four curves referring to the following perturbations: intra-cluster edge perturbation only (green); inter-cluster perturbation only (blue); mixed perturbation with different proportions of intra- and inter-cluster edge perturbation (yellow and purple). 
We can notice that the perturbation  of intra-cluster edges does not significantly affect the accuracy, because perturbing edges inside a cluster does not enforce the change of  class of the associated graph. On the other hand, as we increase the percentage of perturbed inter-cluster edges, the accuracy rapidly decreases, reaching its lowest values when all the pertubed edges are inter-cluster ones. 
In the plot, the red point indicates the accuracy calculated to the test set with no perturbed graphs and the patches on the plot indicate the values of the bounds relative to the distances of the  value of the corresponding accuracy.


\begin{figure}[t!]
        \centering
    \includegraphics[width=1\linewidth]{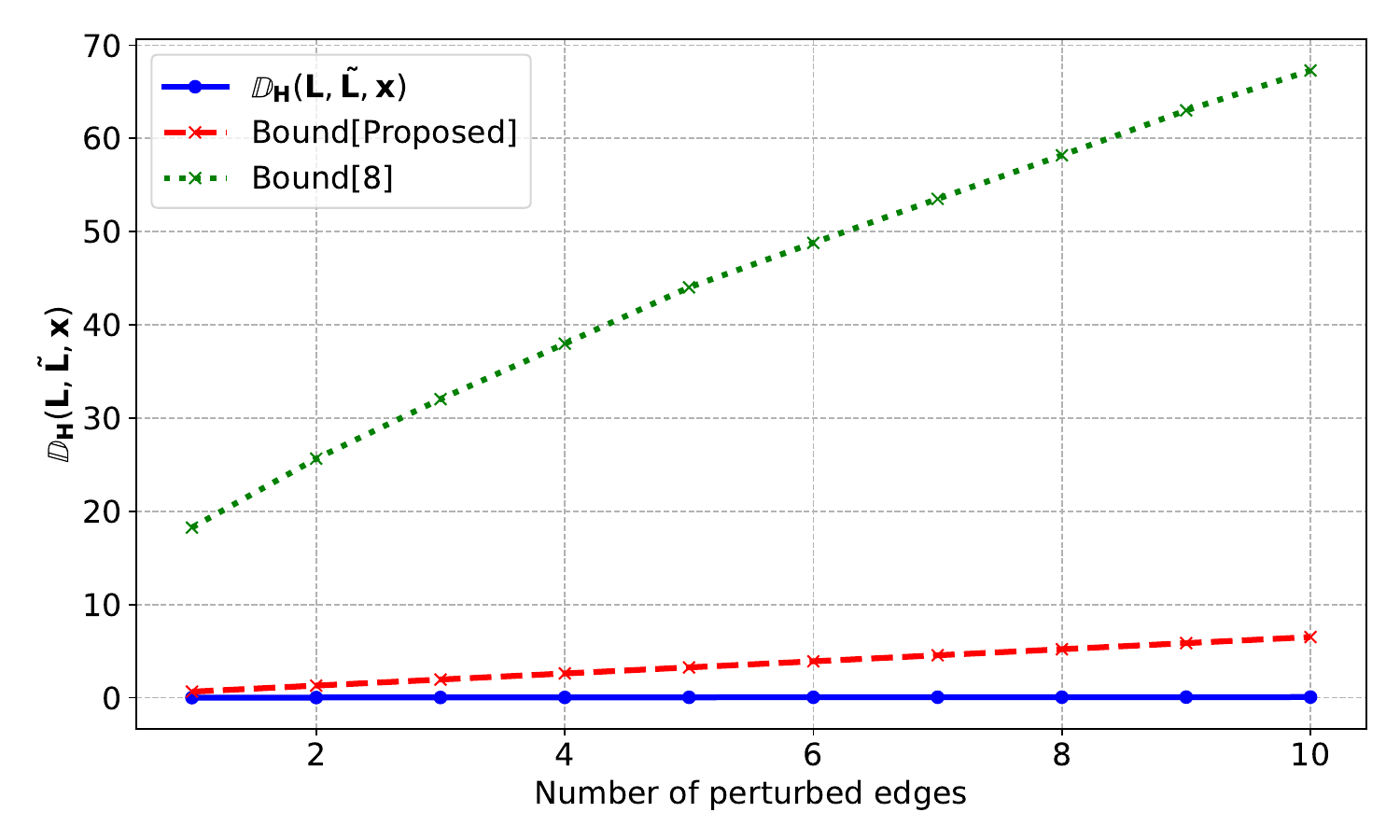}
        \caption{Distance of the outputs $\mathbb{D}_\mathbf{H}(\mathbf{L},\mathbf{\tilde{L}},\mathbf{x})$, vs  number of perturbed edges.}
        \label{fig:label1}
\end{figure}


 


\begin{figure}[t!]
        \centering
        \includegraphics[width=1\linewidth]{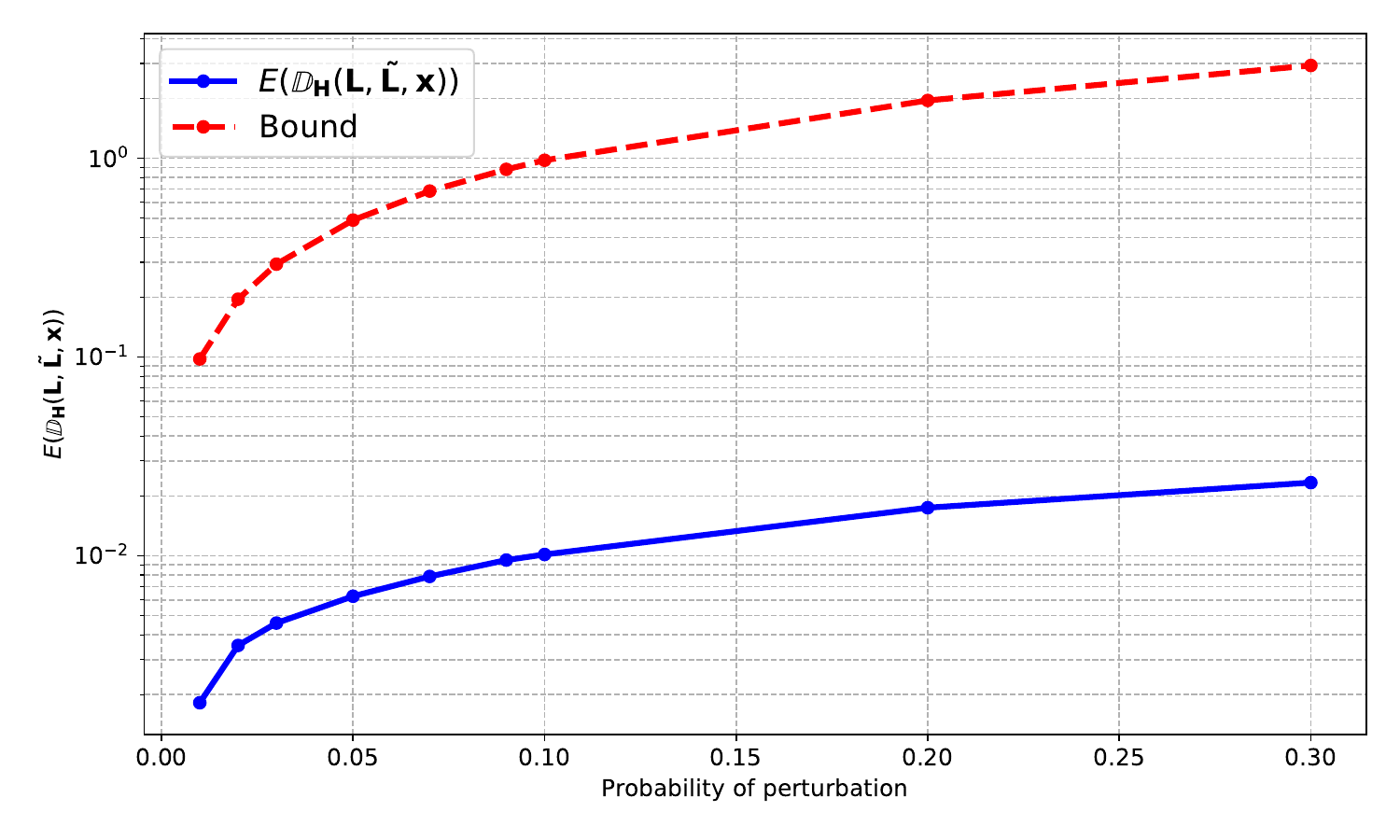 }
        \caption{Expected distance of the outputs $\mathbb{E}(\mathbb{D}_\mathbf{H}(\mathbf{L},\mathbf{\tilde{L}},\mathbf{x}))$ vs edge perturbation probability.}
        \label{fig:label2}
\end{figure}



\begin{figure}[h!]
    \includegraphics[width=1\linewidth]{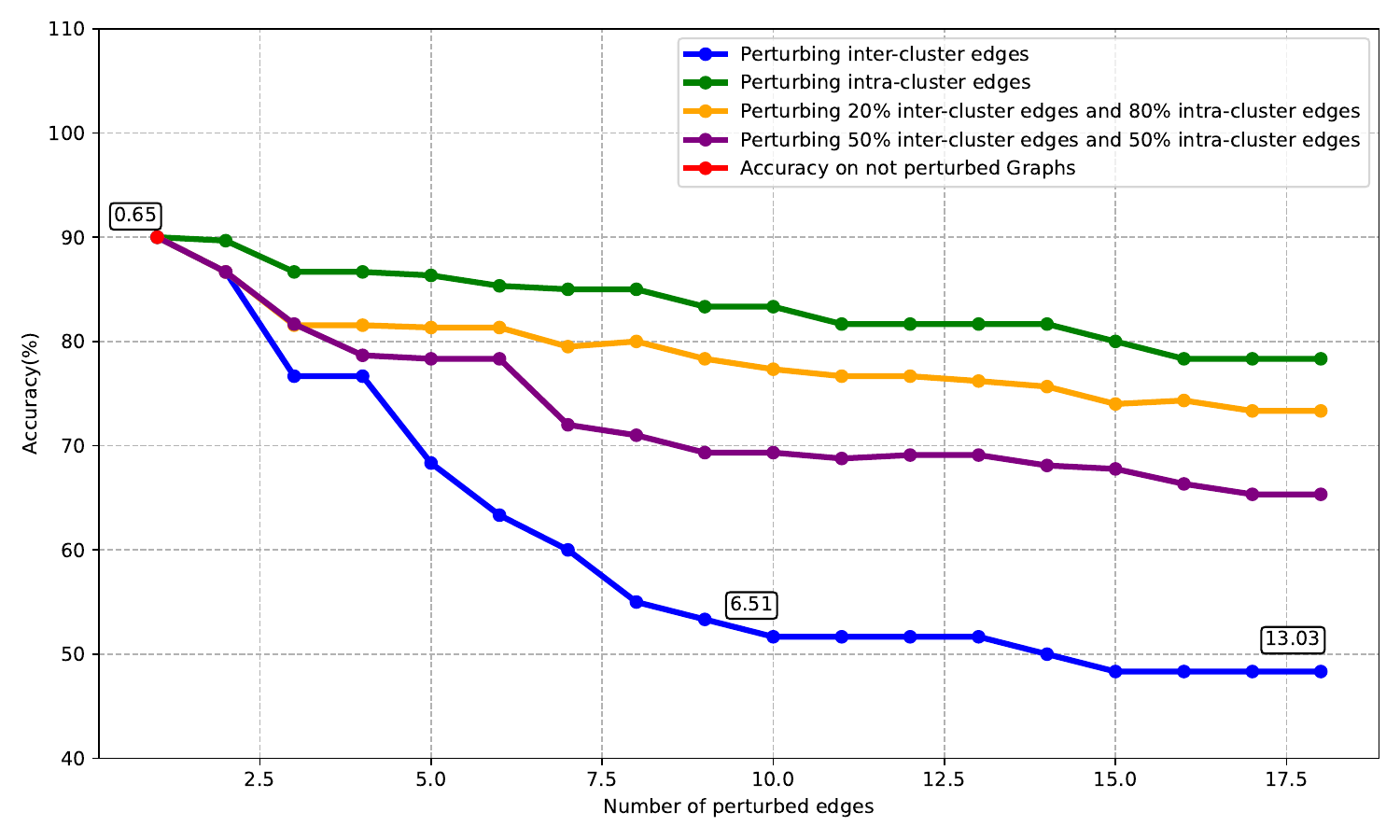}
        \caption{Classification accuracy vs  number of the perturbed edges.} 
        \label{fig:label3}
\end{figure}

\section{Conclusions}

In this work we introduced a novel way to measure the stability of a GCN under small perturbations analysis through the definition of a novel upperbound on the expected difference between the outputs of a GCN calculated on perturbed and unperturberd graphs. 
Through the tools of small perturbations analysis we made the bound computable without the necessity of computing a perturbed shift operator and its eigendecomposition. We numerically confirmed that the proposed bound increases with the number of perturbed edges. We also showed that its explicit dependence on the perturbations of the graph spectrum makes a tight bound. 
Finally, we showed how the classification error of a 1-layer GCN depends on which edges are perturbed and how the proposed bound reflects the changes in the learning performance. We plan to extend this work by considering a multilayer GCN in order to formulate a transferability theory for Graph Convolutional  Neural Networks using tools from small perturbation analysis.

\bibliographystyle{IEEEbib}
\bibliography{biblio.bib}
\end{document}